\newtheorem{myremark}{Remark}
\newtheorem{mytheo}{Theorem}
\newtheorem{myprop}{Proposition}
\newtheorem{myproperty}{Property}
\newtheorem{myassump}{Assumption}
\begin{document}
%
\title{R3Net: Random Weights, Rectifier Linear Units and
	Robustness for Artificial Neural Network}

\author{\IEEEauthorblockN{Arun Venkitaraman, Alireza M. Javid, and Saikat Chatterjee}
\IEEEauthorblockA{Department of Information Science and Engineering,\\School of Electrical Engineering and Computer Science\\
KTH Royal Institute of Technology, Stockholm, Sweden--10044\\
Email: \{arunv,almj,sach\}@kth.se}
}


%


\maketitle

\begin{abstract}
We consider a neural network architecture with randomized features, a sign-splitter, followed by rectified linear units (ReLU). We prove that our architecture exhibits robustness to the input perturbation: the output feature of the neural network exhibits a Lipschitz continuity in terms of the input perturbation. We further show that the network output exhibits a  discrimination ability that inputs that are not arbitrarily close generate output vectors which maintain distance between each other obeying a certain lower bound. This ensures that two different inputs remain discriminable while contracting the distance in the output feature space.
\end{abstract}


%
\IEEEpeerreviewmaketitle

\section{Introduction}
Neural networks and deep learning architectures have revolutionized data analysis over the last decade \cite{DL_SPMag}. Appropriately trained neural networks have been shown to excel in classification and regression tasks, in many cases outperforming humans \cite{Russakovsky2015,DodgeK17b}. The field is continually being enriched with active research pushing classification performance to increasingly higher levels. The rapidly increasing computational power and data storage have only added to the power of neural networks. However, very little is known regarding why the networks are able to gain this superior performance. In addition, it is known that learnt neural networks can be fragile when it comes to handling perturbations in the input \cite{genadvnet_SPMag,genadvnet_1,genadvnet_2,genadvnet_3,genadvnet_4}. It is hypothesized that this is because of the layers of the network being trained to fit the data closely. For example, in image classification, additive noise at very low signal-to-noise ratio levels added to images have been known to disporportionately change the class labels, even when the additive noise is practically unnoticeable to human eyes \cite{genadvnet_1}. Such instability makes the network easy target to adversarial attacks and hacking \cite{genadvnet_3}. This observation has led many researchers to investigate and develop deep networks with features that exhibit robustness to deformation or perturbation by building on invariances \cite{deepscattering_1,Bruna6522407,Mallat20150203,Wiatowski_8051085,WiatowskiB15a}. 

Randomness of features has been used with great success as a mean of reducing the computational complexity of neural networks while achieving comparable performance as with fully learnt networks \cite{mathematicsDN,giryes_randomweights,proglearnnet_17}. In the case of the simple, yet effective, extreme learning machine (ELM), all layers of the network are assigned randomly chosen weights and the learning takes place only at the extreme layer \cite{elm_Huang2012,elm_HUANG2015,elm_8085130}. It has also been shown recently that a performance similar to fully learnt networks may be achieved by training a network with most of the weights assigned randomly and only a small fraction of them being updated throughout the layers \cite{randomDN}. These approaches indicate that randomness has much potential in terms of high-performance at low computational complexity. This motivates us to propose a neural network architecture that uses random weights in the layers followed by a structured sign splitter and rectified linear unit (ReLU) activation functions. We show that the output of each layer exhibits robustness in terms of perturbation in the input--the perturbation of the output of each layer has an upper and a lower bound in terms of the input perturbation. We believe that this is a step towards mathematically explaining the efficiency of random weights and ReLUs in neural networks observed in practice. We name our proposed architecture as R3Net, motivated by words `random weights', `rectifier linear units', and `robustness'. In this article, we show only analytical results, and refrain from providing simulation results. Simulation results will be shown in an extended manuscript later.

\subsection{Notation}
For a scalar $x \in \mathbb{R}$, we denote its sign as $s(x) \triangleq \mathrm{sign}(x)$ and magnitude as $|x|$. Then, we have $x = s(x) \odot |x|$. Sign takes values in the set $\{+1,0,-1\}$. For a vector $\mathbf{x}$, the corresponding sign vector is found by component wise operation and the sign vector is denoted by $\mathbf{s}(\mathbf{x})$. We use $|\mathbf{x}|$ to denote the magnitude of a real vector $\mathbf{x}$ where magnitude is used scalar-wise. For vector $\mathbf{x}$, we denote the non-negative part by $\mathbf{x}^{+}$ and non-positive part by $\mathbf{x}^{-}$, such that $\mathbf{x} = \mathbf{x}^{+} + \mathbf{x}^{-}$.  We denote the ReLU function by $g(\cdot)$ such that $g(x)= \mathrm{max}(0,x)$. We then denote by $\mathbf{g}(\mathbf{x})$ the stack of ReLU activation functions applied component-wise on $\mathbf{x}$. Therefore, $\mathbf{g}(\mathbf{x}) = \mathbf{x}^{+}$. We use $\mathcal{M}$ to denote a set and $\mathcal{M}^c$ to its complement set. Cardinality of a set $\mathcal{M}$ is denoted by $|\mathcal{M}|$. We use $\|\cdot\|$ to denote the $\ell_2$ norm of a vector, and $\|\cdot\|_F$ to denote the Frobenius norm.

\section{Noise robustness and discrimination ability}
It is well known that ANNs which involve a chain of blocks comprised of linear  and nonlinear transformations lead to impressive performance given large amounts of reliable training data. However, as shown recently, this is not sufficient to guarantee that the ANN is a stable one \cite{genadvnet_SPMag,genadvnet_1,genadvnet_2,genadvnet_3,genadvnet_4}. 
In order that an ANN be stable, it is desirable that it possesses 
{\em noise robustness}. Let $\mathbf{x}_1$ and $\mathbf{x}_2$ be two input vectors such that $\mathbf{x}_1 \neq \mathbf{x}_2$,  and the corresponding {\color{black}feature vectors generated by ANN be $\tilde{\mathbf{y}}_1 = \mathbf{f}(\mathbf{x}_1)$ and $\tilde{\mathbf{y}}_2 = \mathbf{f}(\mathbf{x}_2)$. In order to characterize a scenario with input perturbation noise $\Delta$, we assume $\mathbf{x}_2 = \mathbf{x}_1 + \Delta$. Then, the desired property of ANN in terms of robustness is expressible as
\begin{align}
	 \|  \tilde{\mathbf{y}}_1 - \tilde{\mathbf{y}}_2 \|^2 = \|  \mathbf{f}(\mathbf{x}_1) - \mathbf{f}(\mathbf{x}_1) \|^2  \leq B \|  \mathbf{x}_1 - \mathbf{x}_2 \|^2,
\end{align}
where $0 < B \leq 1$.
Further, it is often desirable that the feature vector continues to maintain a certain minimum distance between if the input vectors are different. In other words, we would like to have the following property:
\begin{align}
	A  \|  \mathbf{x}_1 - \mathbf{x}_2 \|^2 \leq
	\|  \tilde{\mathbf{y}}_1 - \tilde{\mathbf{y}}_2 \|^2,
\end{align}
where $0<A\leq B$.
This ensures that the targets do not go arbitrarily close when the inputs are not close and it is possible to discriminate one feature from the other. The upper bound helps to provide noise robustness: the perturbation in the output is a constant multiple of input perturbation $\Delta$.}

\section{Single Block Construction}

In order to investigate the desired properties, we first consider a single block of ANN, usually referred to as a layer in neural network literature. The block has an input vector $\mathbf{q} \in \mathbb{R}^{q\times 1}$ and an output vector $\mathbf{y} = \mathbf{g}(\mathbf{W}\mathbf{q})$, where $\mathbf{W}$ is the linear transform or weight matrix and $\mathbf{g}(\cdot)$ the component-wise nonlinearity (the ReLU in our case). The dimension of $\mathbf{y}$ is the number of neurons in the block. If we can ensure that one block of the ANN provides both noise robustness and point discrimination property, then, the full ANN comprising multiple blocks connected sequentially can be guaranteed to hold robustness and discriminative properties.   
This argument boils down to the construction of matrix $\mathbf{W}$ which promotes noise robustness and discriminative power in each block. 

\subsection{ReLU function: Properties and a limitation}
We now discuss some properties and a limitation of the ReLU. 
The ReLU operation on a scalar $x$ is given by
\begin{align*}
g(x)\triangleq\max(x,0).
\end{align*}
As a consequence of which we can see that the vector transformation $\mathbf{g}(\cdot)$ consisting of component-wise ReLU operations has 
%

\begin{myproperty}
	ReLU function provides sparse output vector $\mathbf{y}$ such that $\| \mathbf{y} \|_0 \leq \mathrm{dim}(\mathbf{y})$.
\end{myproperty}
\begin{myproperty}
	\label{property:property2}
	Let us consider $\mathbf{z} = \mathbf{W} \mathbf{q}$. For two vectors $\mathbf{q}_1$ and $\mathbf{q}_2$, we have corresponding vectors $\mathbf{z}_1 = \mathbf{W} \mathbf{q}_1$ and $\mathbf{z}_2 = \mathbf{W} \mathbf{q}_2$, and output vectors $\mathbf{y}_1 = \mathbf{g}(\mathbf{z}_1) = \mathbf{g}(\mathbf{W}\mathbf{q}_1)$ and $\mathbf{y}_2 = \mathbf{g}(\mathbf{z}_2) = \mathbf{g}(\mathbf{W}\mathbf{q}_2)$. 
	Then, we have the following relation 
	\begin{eqnarray}
		0 \leq \| \mathbf{y}_1 - \mathbf{y}_2 \|^2 = \| \mathbf{g}(\mathbf{z}_1) - \mathbf{g}(\mathbf{z}_2) \|^2 \leq \| \mathbf{z}_1 - \mathbf{z}_2 \|^2.
	\end{eqnarray}
		\begin{proof}
		For scalars $x_1$ and $x_2$, we have $y_1 = g(x_1)$ and  $y_2 = g(x_2)$. We have the following relation
		\begin{eqnarray*}
			(y_1 - y_2)^2 = \left \{
			\begin{array}{lr}
				(x_1 - x_2)^2 	& \mathrm{if} \,\, x_1 > 0, x_2 > 0  \\
				x_1^2 			& \mathrm{if} \,\, x_1 > 0, x_2 < 0  \\
				x_2^2 			& \mathrm{if} \,\, x_1 < 0, x_2 > 0  \\
				0				& \mathrm{if} \,\, x_1 < 0, x_2 < 0
			\end{array}
			\right.
			.
		\end{eqnarray*}
		Therefore, we observe that the ReLU function satisfies
		\begin{eqnarray*}
			0 \leq (y_1 - y_2)^2 \leq (x_1 - x_2)^2.
		\end{eqnarray*}
		Then, considering the vectors $\mathbf{y}_1 = \mathbf{g}(\mathbf{z}_1) = \mathbf{g}(\mathbf{W}\mathbf{q}_1)$ and $\mathbf{y}_2 = \mathbf{g}(\mathbf{z}_2) = \mathbf{g}(\mathbf{W}\mathbf{q}_2)$, we have that
		\begin{align*}
		0 &\leq \| \mathbf{y}_1 - \mathbf{y}_2 \|^2 = \sum_i (y_1(i) - y_2(i))^2 \leq  \sum_i (z_1(i) - z_2(i))^2  \nonumber\\&= \| \mathbf{z}_1 - \mathbf{z}_2 \|^2
		\end{align*}
		where $y_1(i)$ is the $i$'th scalar element of $\mathbf{y}_1$ and $z_1(i)$ is the the $i$'th scalar element of $\mathbf{z}_1$.		
	\end{proof}
\end{myproperty}
\noindent The upper bound relation in Property 3 implies that ReLU is Lipschitz continuous with Lipschitz constant 1. This show that the output perturbation of the ReLU is bounded by the input perturbation thereby providing noise robustness. On the other hand, the lower bound being zero does not support our need of maintaining a minimum distance between two points $\mathbf{y}_1$ and $\mathbf{y}_2$. 
An example of the extreme effect is the case when $\mathbf{z}_1$ and $\mathbf{z}_2$ are non-positive vectors, and we get $\| \mathbf{y}_1 - \mathbf{y}_2 \|^2 = 0$. 
This is then a limitation of the ReLU in achieving a good discriminative power. 

\subsection{Overcoming the limitation}
We now engineer a remedy of the limitation of the ReLU. Let us consider $\mathbf{z} = \mathbf{W} \mathbf{q} \in \mathbb{R}^{n}$ and $\bar{\mathbf{y}}=\mathbf{g}(\mathbf{V}\mathbf{z})$ where $\mathbf{V}$ is a linear transform matrix. In other words, we introduce an additional linear transform after $\mathbf{W}$ in the block.
For two vectors $\mathbf{q}_1$ and $\mathbf{q}_2$, we have the corresponding vectors $\mathbf{z}_1 = \mathbf{W} \mathbf{q}_1$ and $\mathbf{z}_2 = \mathbf{W} \mathbf{q}_2$, and output vectors $\bar{\mathbf{y}}_1 = \mathbf{g}(\mathbf{V}\mathbf{z}_1)$ and $\bar{\mathbf{y}}_2 = \mathbf{g}(\mathbf{V}\mathbf{z}_2)$. Our interest is to show that there exists a $\mathbf{V}$ matrix for which we have both noise robustness and discriminative power properties, given $\mathbf{W}$ and the ReLU.

\begin{myprop}
	\label{proposition:proposition_with_Vn}
	Let us construct a $\mathbf{V}$ matrix as follows
	\begin{eqnarray}
		\mathbf{V} = \left[ 
		\begin{array}{c}
			\mathbf{I}_n \\
			- \mathbf{I}_n
		\end{array}
		\right] \triangleq \mathbf{V}_n.
	\end{eqnarray}
	For the output vectors $\bar{\mathbf{y}}_1 = \mathbf{g}(\mathbf{V}_n\mathbf{z}_1) \in \mathbb{R}^{2n}$ and $\bar{\mathbf{y}}_2 = \mathbf{g}(\mathbf{V}_n\mathbf{z}_2) \in \mathbb{R}^{2n}$, we have
	\begin{eqnarray}
		0 < \kappa \| \mathbf{z}_1 - \mathbf{z}_2 \|^2 \leq \| \bar{\mathbf{y}}_1 - \bar{\mathbf{y}}_2 \|^2  \leq \| \mathbf{z}_1 - \mathbf{z}_2 \|^2,
		\label{eq:LBAndUB_1}
	\end{eqnarray}
	where $0<\kappa \leq 1$ and $\kappa$ is a function of $\mathbf{z}_1$ and $\mathbf{z}_2$.
\end{myprop}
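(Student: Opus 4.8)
The plan is to exploit the special structure of $\mathbf{V}_n$. For any $\mathbf{z} \in \mathbb{R}^n$ the product $\mathbf{V}_n \mathbf{z}$ is the $2n$-vector obtained by stacking $\mathbf{z}$ on top of $-\mathbf{z}$, so applying the component-wise ReLU yields the ``sign-split'' vector $\bar{\mathbf{y}} = \mathbf{g}(\mathbf{V}_n \mathbf{z})$ consisting of $\mathbf{z}^{+}$ stacked on top of $-\mathbf{z}^{-}$, using $g(z) = z^{+}$ and $g(-z) = -z^{-}$ entrywise. The map $\mathbf{z} \mapsto \bar{\mathbf{y}}$ thus records the positive and negative parts of $\mathbf{z}$ separately, and, crucially, it is injective: $\bar{\mathbf{y}}_1 = \bar{\mathbf{y}}_2$ forces $\mathbf{z}_1^{+} = \mathbf{z}_2^{+}$ and $\mathbf{z}_1^{-} = \mathbf{z}_2^{-}$, hence $\mathbf{z}_1 = \mathbf{z}_2$. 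This injectivity is what will rescue the discrimination (lower-bound) property that the bare ReLU loses in Property \ref{property:property2}.

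Both squared norms split over coordinates, so I would reduce everything to a scalar computation. Writing $a = z_1(i)$ and $b = z_2(i)$ for a generic coordinate, the term contributed to $\| \bar{\mathbf{y}}_1 - \bar{\mathbf{y}}_2 \|^2$ is $(a^{+} - b^{+})^2 + (a^{-} - b^{-})^2$, while the term contributed to $\| \mathbf{z}_1 - \mathbf{z}_2 \|^2$ is $(a-b)^2$. A four-way case split on the signs of $a$ and $b$ shows that this scalar quantity equals $(a-b)^2$ when $a,b$ share a sign and equals $a^2 + b^2$ when they have opposite signs. The upper bound $\| \bar{\mathbf{y}}_1 - \bar{\mathbf{y}}_2 \|^2 \le \| \mathbf{z}_1 - \mathbf{z}_2 \|^2$ then follows coordinate-wise (in the opposite-sign case $a^2 + b^2 \le (a-b)^2$ because $-2ab > 0$), and summing recovers the Lipschitz/noise-robustness half exactly as in Property \ref{property:property2}.

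For the lower bound I would isolate the cross term. Collecting coordinates gives the identity $\| \bar{\mathbf{y}}_1 - \bar{\mathbf{y}}_2 \|^2 = \| \mathbf{z}_1 - \mathbf{z}_2 \|^2 - 2S$, where $S = \sum_i (z_1^{+}(i) - z_2^{+}(i))(z_1^{-}(i) - z_2^{-}(i)) \ge 0$ (each summand vanishes in the same-sign cases and equals $-ab > 0$ in the opposite-sign cases). The decisive algebraic fact is the per-coordinate identity $a^2 + b^2 = \tfrac{1}{2}(a-b)^2 + \tfrac{1}{2}(a+b)^2$, which bounds each opposite-sign term below by $\tfrac{1}{2}(a-b)^2$, while the same-sign terms already match $(a-b)^2$. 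Summing yields $\| \bar{\mathbf{y}}_1 - \bar{\mathbf{y}}_2 \|^2 \ge \tfrac{1}{2}\|\mathbf{z}_1 - \mathbf{z}_2\|^2$, so one may take the instance-dependent constant $\kappa = 1 - 2S / \| \mathbf{z}_1 - \mathbf{z}_2 \|^2$, which the estimate confines to $[\tfrac{1}{2},1] \subset (0,1]$ whenever $\mathbf{z}_1 \neq \mathbf{z}_2$.

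I expect the main obstacle to be the lower bound rather than the upper bound, the latter being essentially Property \ref{property:property2} once more. The content lies in seeing that the cross term $2S$ can never consume all of $\| \mathbf{z}_1 - \mathbf{z}_2 \|^2$: injectivity of the sign-splitter guarantees strict positivity, and the quadratic identity above converts that qualitative statement into the quantitative floor $\kappa \ge \tfrac{1}{2}$. Some care is needed at the boundary cases $a = 0$ or $b = 0$ (where the sign is $0$) so that the case analysis stays exhaustive and the stated equalities continue to hold.
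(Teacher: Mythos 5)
Your proof is correct, and its skeleton coincides with the paper's: the four-way sign case analysis showing that a same-sign coordinate contributes $(a-b)^2$ to $\|\bar{\mathbf{y}}_1-\bar{\mathbf{y}}_2\|^2$ while an opposite-sign coordinate contributes $a^2+b^2$ is precisely the paper's decomposition via the index set $\mathcal{M}(\mathbf{z}_1,\mathbf{z}_2)=\{i\,|\,s(z_1(i))=s(z_2(i))\neq 0\}$ and its complement in \eqref{eq:z_distance} and \eqref{eq:y_bar_distance}, and the upper bound is extracted identically. Where you genuinely depart is the lower bound. The paper takes $\gamma$ to be the larger of the two partial sums (its $\max_{\mathbf{z}_1,\mathbf{z}_2}$ notation is garbled and should be read as a max of the two sums for the given pair), sets $\kappa=\gamma/\|\mathbf{z}_1-\mathbf{z}_2\|^2$, and concludes only that $\kappa$ is some instance-dependent number in $(0,1]$, with no quantitative floor. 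You instead isolate the cross term, $\|\bar{\mathbf{y}}_1-\bar{\mathbf{y}}_2\|^2=\|\mathbf{z}_1-\mathbf{z}_2\|^2-2S$ with $S\geq 0$, and apply the identity $a^2+b^2=\tfrac{1}{2}(a-b)^2+\tfrac{1}{2}(a+b)^2\geq\tfrac{1}{2}(a-b)^2$ on each opposite-sign coordinate. This buys a strictly stronger conclusion that the paper never states: the universal bound $\|\bar{\mathbf{y}}_1-\bar{\mathbf{y}}_2\|^2\geq\tfrac{1}{2}\|\mathbf{z}_1-\mathbf{z}_2\|^2$, i.e.\ $\kappa\geq\tfrac{1}{2}$ always, and this floor is tight, since the paper's own example in Remark~\ref{remark:noise_effect_in_z} ($\mathcal{M}^c$ the full set with $|z_1(i)|=|z_2(i)|$) attains exactly $\tfrac{1}{2}$. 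Two small remarks: both your argument and the paper's implicitly require $\mathbf{z}_1\neq\mathbf{z}_2$ for the strict inequality $0<\kappa\|\mathbf{z}_1-\mathbf{z}_2\|^2$ to be meaningful, and your appeal to injectivity of the sign-splitter is decorative rather than load-bearing — the quantitative per-coordinate estimate does all the work.
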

\begin{proof}
We have $\mathbf{z} = \mathbf{W} \mathbf{q} \in \mathbb{R}^{n}$ and $\bar{\mathbf{y}}=\mathbf{g}(\mathbf{V}_n\mathbf{z}) \in \mathbb{R}^{2n}$ where 
\begin{eqnarray*}
	\mathbf{V}_n = \left[ 
	\begin{array}{c}
		\mathbf{I}_n \\
		- \mathbf{I}_n
	\end{array}
	\right].
\end{eqnarray*}
For two vectors $\mathbf{q}_1$ and $\mathbf{q}_2$, we have corresponding vectors $\mathbf{z}_1 = \mathbf{W} \mathbf{q}_1$ and $\mathbf{z}_2 = \mathbf{W} \mathbf{q}_2$, and output vectors $\bar{\mathbf{y}}_1 = \mathbf{g}(\mathbf{V}_n\mathbf{z}_1)$ and $\bar{\mathbf{y}}_2 = \mathbf{g}(\mathbf{V}_n\mathbf{z}_2)$.  Let us define a set 
\begin{eqnarray}
	\mathcal{M}(\mathbf{z}_1,\mathbf{z_2}) = \{ i | s(z_1(i)) = s(z_2(i)) \neq 0 \} \subseteq \{1,2, \hdots , n\}.
\end{eqnarray}
Then, we have
\begin{align}
		\| \mathbf{z}_1 - \mathbf{z}_2 \|^2 & = \displaystyle \sum_{i=1} ( z_1(i) - z_2(i) )^2 \nonumber\\
		& = \displaystyle\sum_{i} (  s(z_1(i)) \odot |z_1(i)|    -  s(z_2(i)) \odot |z_2(i)|  ) )^2 \nonumber\\
		& = \displaystyle \sum_{i \in \mathcal{M}(\mathbf{z}_1,\mathbf{z}_2)} ( |z_1(i)| - |z_2(i)| )^2 \nonumber\\
		&\quad\displaystyle+ \sum_{i \in \mathcal{M}^c(\mathbf{z}_1,\mathbf{z}_2)} ( |z_1(i)| + |z_2(i)| )^2.
	\label{eq:z_distance}
\end{align}
Expressing the vectors in terms of $\mathbf{z} = \mathbf{z}^{+} + \mathbf{z}^{-} = (\mathbf{s}(\mathbf{z}^{+}) \odot |\mathbf{z}^{+}| ) + (\mathbf{s}(\mathbf{z}^{-}) \odot |\mathbf{z}^{-}| )$, we have the outputs of the ReLU operation as follows
\begin{eqnarray*}
	\bar{\mathbf{y}}_1 = \mathbf{g}(\mathbf{V}_n \mathbf{z}_1) = \left[
	\begin{array}{c}
		|\mathbf{z}_1^{+}|  \\
		|\mathbf{z}_1^{-}| 
	\end{array}
	\right] \,\, \mathrm{and} \,\,
	\bar{\mathbf{y}}_2 = \mathbf{g}(\mathbf{V}_n \mathbf{z}_2) = \left[
	\begin{array}{c}
		|\mathbf{z}_2^{+}|  \\
		|\mathbf{z}_2^{-}| 
	\end{array}
	\right].
\end{eqnarray*}

\begin{figure*}[t]
	\begin{align}
	\| \bar{\mathbf{y}}_1 - \bar{\mathbf{y}}_2 \|^2  &= \| |\mathbf{z}_1^{+}| - |\mathbf{z}_2^{+}| \|^2 + \| |\mathbf{z}_1^{-}| - |\mathbf{z}_2^{-}| ) \|^2 \nonumber\\
	&= \displaystyle \sum_{i \in \mathcal{M}(|\mathbf{z}_1^{+}| ,|\mathbf{z}_2^{+}| )} ( |z_1^{+}(i)| - |z_2^{+}(i)| )^2 \quad+\displaystyle  \sum_{i \in \mathcal{M}^c(|\mathbf{z}_1^{+}| ,|\mathbf{z}_2^{+}| )} ( |z_1^{+}(i)| + |z_2^{+}(i)| )^2 \nonumber\\&\quad+ \displaystyle \sum_{i \in \mathcal{M}(|\mathbf{z}_1^{-}| ,|\mathbf{z}_2^{-}| )} ( |z_1^{-}(i)| - |z_2^{-}(i)| )^2 \quad+\displaystyle  \sum_{i \in \mathcal{M}^c(|\mathbf{z}_1^{-}| ,|\mathbf{z}_2^{-}| )} ( |z_1^{-}(i)| + |z_2^{-}(i)| )^2 \nonumber\\
	&= \displaystyle \sum_{i \in \mathcal{M}(|\mathbf{z}_1^{+}| ,|\mathbf{z}_2^{+}| )} ( |z_1^{+}(i)| - |z_2^{+}(i)| )^2  +\displaystyle  \displaystyle \sum_{i \in \mathcal{M}(|\mathbf{z}_1^{-}| ,|\mathbf{z}_2^{-}| )} ( |z_1^{-}(i)| - |z_2^{-}(i)| )^2 \nonumber\\&\quad+ \displaystyle \sum_{i \in \mathcal{M}^c(|\mathbf{z}_1^{+}| ,|\mathbf{z}_2^{+}| )} ( |z_1^{+}(i)| + |z_2^{+}(i)| )^2  +\displaystyle  \sum_{i \in \mathcal{M}^c(|\mathbf{z}_1^{-}| ,|\mathbf{z}_2^{-}| )} ( |z_1^{-}(i)| + |z_2^{-}(i)| )^2 \nonumber\\
	& = \displaystyle \sum_{i \in \mathcal{M}(\mathbf{z}_1,\mathbf{z}_2)} ( |z_1(i)| - |z_2(i)| )^2 +\displaystyle  \sum_{i \in \mathcal{M}^c(\mathbf{z}_1,\mathbf{z}_2)}  |z_1(i)|^2 + |z_2(i)|^2.
	\label{eq:y_bar_distance}
	\end{align}
\end{figure*}
From \eqref{eq:z_distance} and \eqref{eq:y_bar_distance}, we have the following relation
\begin{eqnarray*}
	\| \bar{\mathbf{y}}_1 - \bar{\mathbf{y}}_2 \|^2 \leq \| \mathbf{z}_1 - \mathbf{z}_2 \|^2,
\end{eqnarray*}
where equality holds when $\mathcal{M}^c = \emptyset$, which is the case when sign patterns of $\mathbf{z}_1$ and $\mathbf{z}_2$ match exactly. We next define the parameter
\begin{eqnarray}
	\gamma \triangleq \displaystyle\max_{\mathbf{z}_1,\mathbf{z}_2}\Big( & \sum_{i \in \mathcal{M}(\mathbf{z}_1,\mathbf{z}_2)} & ( |z_1(i)| - |z_2(i)| )^2,\nonumber\\ & \sum_{i \in \mathcal{M}^c(\mathbf{z}_1,\mathbf{z}_2)} &  |z_1(i)|^2 + |z_2(i)|^2 \Big).\nonumber
\end{eqnarray} 
We note that $0 < \gamma \leq \| \bar{\mathbf{y}}_1 - \bar{\mathbf{y}}_2 \|^2 \leq \| \mathbf{z}_1 - \mathbf{z}_2 \|^2$ and hence, it follows that
\begin{eqnarray}
	0 < \frac{\gamma}{\| \mathbf{z}_1 - \mathbf{z}_2 \|^2} \| \mathbf{z}_1 - \mathbf{z}_2 \|^2 \leq \| \bar{\mathbf{y}}_1 - \bar{\mathbf{y}}_2 \|^2 \leq \| \mathbf{z}_1 - \mathbf{z}_2 \|^2.\nonumber
\end{eqnarray}
On defining
\begin{eqnarray}
	\kappa \triangleq \frac{\gamma}{\| \mathbf{z}_1 - \mathbf{z}_2 \|^2},
\end{eqnarray} we get \eqref{eq:LBAndUB_1}
since $0 < \kappa \leq 1$.
\end{proof}

The difference signal $\Delta \mathbf{z} = \mathbf{z}_1 - \mathbf{z}_2$ can be treated as the perturbation noise. Note that $\Delta \mathbf{z} = \mathbf{z}_1 - \mathbf{z}_2 = \mathbf{W}[\mathbf{q}_1 - \mathbf{q}_2] = \mathbf{W} \, \Delta \mathbf{q}$. To investigate the effect of the perturbation noise, we now state our main assumption.

\begin{myassump}
	\label{assump:sign_pattern_change_in_z}
	A $\Delta \mathbf{z}$ with a low strength (that means $\| \Delta \mathbf{z} \|^2$ is low) does not create a high change in the sign patterns of $\mathbf{z}_1$ and $\mathbf{z}_2$. This means that for a small perturbation, $\mathcal{M}(\mathbf{z}_1,\mathbf{z_2}) = \{ i | s(z_1(i)) = s(z_2(i)) \neq 0 \}$ is close to the entire index set and $\mathcal{M}^c(\mathbf{z}_1,\mathbf{z_2})$ is close to an empty set. On the other hand, for a high perturbation noise strength, we assume that $\mathcal{M}(\mathbf{z}_1,\mathbf{z_2}) = \{ i | s(z_1(i)) = s(z_2(i)) \neq 0 \}$ is close to an empty set and $\mathcal{M}^c(\mathbf{z}_1,\mathbf{z_2})$ is close to a full set.
\end{myassump}

\begin{myremark}[Tightness of bounds and effect of noise]
	\label{remark:noise_effect_in_z}
	For a low perturbation noise strength, we have $\| \bar{\mathbf{y}}_1 - \bar{\mathbf{y}}_2 \|^2 \approx \| \mathbf{z}_1 - \mathbf{z}_2 \|^2$ and $\kappa \approx 1$. This follows from the proof of Proposition~\ref{proposition:proposition_with_Vn}, specifically equations \eqref{eq:z_distance} and \eqref{eq:y_bar_distance}. In fact, if $\mathcal{M}^c = \emptyset$ then $\| \bar{\mathbf{y}}_1 - \bar{\mathbf{y}}_2 \|^2 = \| \mathbf{z}_1 - \mathbf{z}_2 \|^2$ and $\kappa = 1$. We interpret that a low perturbation noise passes through the transfer function $\mathbf{g}(\mathbf{V}\mathbf{z})$ almost unhindered. On the other hand, a perturbation noise with high strength is attenuated. Let us construct an illustrative example. Assume that $\mathcal{M}^c(\mathbf{z}_1,\mathbf{z_2})$ is a full set and $\forall i \in \mathcal{M}^c, \,\, |z_1(i)| = |z_2(i)|$. In that case $\| \bar{\mathbf{y}}_1 - \bar{\mathbf{y}}_2 \|^2 = 0.5 \| \mathbf{z}_1 - \mathbf{z}_2 \|^2$ and we can comment that the perturbation noise is significantly attenuated.
\end{myremark}

\begin{myproperty}
The output vector $\bar{\mathbf{y}} = \mathbf{g}(\mathbf{V}_n \mathbf{z}) \in \mathbb{R}^{2n}$ is sparse and $\| \bar{\mathbf{y}} \|_0 \leq \frac{1}{2} \times \mathrm{dim}(\bar{\mathbf{y}}) = n$. 
\end{myproperty}
\begin{proof}
	 Let us assume that $\mathbf{z}$ has no scalar component that is zero. Now, for an extreme case where $\mathbf{z}$ is positive, we have $\mathbf{z}^{-} = \mathbf{0}$. Similarly, if $\mathbf{z}$ is negative, then, we have $\mathbf{z}^{+} = \mathbf{0}$. For these two extreme cases $\| \bar{\mathbf{y}} \|_0 = n$. In any other case when $\mathbf{z}$ has zero scalars, the inequality result will follow.
\end{proof}

\subsection{Input-output relation}
We now establish relation between the block input vector $\mathbf{q} \in \mathbb{R}^m$ and output vector $\bar{\mathbf{y}} \in \mathbb{R}^{2n}$. For two vectors $\mathbf{q}_1$ and $\mathbf{q}_2$, we have corresponding vectors $\mathbf{z}_1 = \mathbf{W} \mathbf{q}_1$ and $\mathbf{z}_2 = \mathbf{W} \mathbf{q}_2$, and output vectors $\bar{\mathbf{y}}_1 = \mathbf{g}(\mathbf{V}_n\mathbf{z}_1) = \mathbf{g}(\mathbf{V}_n \mathbf{W}\mathbf{q}_1) $ and $\bar{\mathbf{y}}_2 = \mathbf{g}(\mathbf{V}_n\mathbf{z}_2) = \mathbf{g}(\mathbf{V}_n \mathbf{W}\mathbf{q}_2)$. Our interest is to show that it is possible to construct a $\mathbf{W} \in \mathbb{R}^{n \times m}$ matrix for which we have both noise robustness and discriminative power properties.

\begin{myassump}
	We assume that the input vector $\mathbf{q}$ is a sparse vector, that means the sparsity level $\nu \triangleq \| \mathbf{q} \|_0  \leq \mathrm{dim}(\mathbf{q}) \triangleq m$
\end{myassump}
\noindent The assumption is valid if the vector $\mathbf{q}$ is considered as the output of a similar block in a feedforward network.

\begin{myassump}
	From theory of compressed sensing, specifically restricted-isometry-property (RIP) of random matrices, we assume that if $n=O(\nu \log m)$, then, we can construct $\mathbf{W}$ matrix with a restricted-isometry-constant (RIC) $\delta \in (0,1)$, such that the following result holds
	\begin{eqnarray}
		(1-\delta) \| \mathbf{q}_1 - \mathbf{q}_2 \|^2 \leq \| \mathbf{z}_1 - \mathbf{z}_2 \|^2 \leq (1+\delta) \| \mathbf{q}_1 - \mathbf{q}_2 \|^2,
		\label{eq:RIP_property_necessary}
	\end{eqnarray}
	where sparse vectors $\mathbf{q}_1$ and $\mathbf{q}_2$ have a sparsity level $\nu$.
\end{myassump}

\begin{myremark}[Construction of $\mathbf{W}$ matrix]
	The $\mathbf{W} = \{ w_{ij} \}$ matrix is a randomly drawn instance. A popular approach is to draw $w_{ij}$ independently from the Gaussian distribution $\mathcal{N}(0,\frac{1}{n})$. One may also use other distributions, such as Bernoulli, Rademacher \cite{Baranuik_CS}.
\end{myremark}

\begin{myprop}
	For a randomly constructed $\mathbf{W} \in \mathbb{R}^{n \times m}$ matrix with number of rows $n=O(\| \mathbf{q} \|_0 \log m)$, we can combine the inequalities in \eqref{eq:LBAndUB_1} and \eqref{eq:RIP_property_necessary} to get
	\begin{eqnarray}
		\kappa(1-\delta) \| \mathbf{q}_1 - \mathbf{q}_2 \|^2 \leq \| \bar{\mathbf{y}}_1 - \bar{\mathbf{y}}_2 \|^2 \leq (1+\delta) \| \mathbf{q}_1 - \mathbf{q}_2 \|^2.
	\end{eqnarray}
	If we construct a block with the transfer function $\bar{\mathbf{y}} = \mathbf{g}(\mathbf{V}_n\mathbf{z}) = \mathbf{g}(\mathbf{V}_n \mathbf{W}\mathbf{q})$ where we use randomly chosen $\mathbf{W}$ matrix with appropriate size, then, the block provides noise robustness and discriminative power properties.
	
	When there is no requirement on $\mathbf{q}$ to be sparse, we can construct $\mathbf{W}  \in \mathbb{R}^{n \times m}$ as an instance of random orthonormal matrix, such that $n \geq m$ and $\mathbf{W}^{\top}\mathbf{W}=\mathbf{I}_m$. In that case, we have the relation 
	\begin{eqnarray}
		\| \mathbf{q}_1 - \mathbf{q}_2 \|^2 = \| \mathbf{z}_1 - \mathbf{z}_2 \|^2
	\end{eqnarray}
	for a pair of $(\mathbf{q}_1,\mathbf{q}_2)$ irrespective of sparsity. Combining the above relation with the relation \eqref{eq:LBAndUB_1}, we have the following result when we use random instance of orthonormal $\mathbf{W}$ matrix
	\begin{eqnarray}
		\kappa \| \mathbf{q}_1 - \mathbf{q}_2 \|^2 \leq \| \bar{\mathbf{y}}_1 - \bar{\mathbf{y}}_2 \|^2 \leq  \| \mathbf{q}_1 - \mathbf{q}_2 \|^2.
	\end{eqnarray}
\end{myprop}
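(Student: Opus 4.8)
The plan is to obtain both bounds by composing the single-block inequality \eqref{eq:LBAndUB_1} from Proposition~\ref{proposition:proposition_with_Vn} with the restricted-isometry inequality \eqref{eq:RIP_property_necessary}, viewing the map $\mathbf{q} \mapsto \bar{\mathbf{y}} = \mathbf{g}(\mathbf{V}_n\mathbf{W}\mathbf{q})$ as the composition of the linear step $\mathbf{q}\mapsto\mathbf{z}=\mathbf{W}\mathbf{q}$ followed by the sign-splitter/ReLU step $\mathbf{z}\mapsto\bar{\mathbf{y}}=\mathbf{g}(\mathbf{V}_n\mathbf{z})$. Since $\mathbf{z}_1-\mathbf{z}_2 = \mathbf{W}(\mathbf{q}_1-\mathbf{q}_2)$, the quantity $\|\mathbf{z}_1-\mathbf{z}_2\|^2$ is the common link between the two inequalities, so the whole argument reduces to transitivity: substituting one sandwich into the other.

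For the sparse case, first I would invoke \eqref{eq:RIP_property_necessary} to bound $\|\mathbf{z}_1-\mathbf{z}_2\|^2$ between $(1-\delta)\|\mathbf{q}_1-\mathbf{q}_2\|^2$ and $(1+\delta)\|\mathbf{q}_1-\mathbf{q}_2\|^2$. For the upper bound I would chain the upper half of \eqref{eq:LBAndUB_1}, namely $\|\bar{\mathbf{y}}_1-\bar{\mathbf{y}}_2\|^2\le\|\mathbf{z}_1-\mathbf{z}_2\|^2$, with the upper half of the restricted-isometry inequality to obtain $\|\bar{\mathbf{y}}_1-\bar{\mathbf{y}}_2\|^2\le(1+\delta)\|\mathbf{q}_1-\mathbf{q}_2\|^2$. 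For the lower bound I would chain the lower half $\kappa\|\mathbf{z}_1-\mathbf{z}_2\|^2\le\|\bar{\mathbf{y}}_1-\bar{\mathbf{y}}_2\|^2$ with the lower half of \eqref{eq:RIP_property_necessary}, giving $\kappa(1-\delta)\|\mathbf{q}_1-\mathbf{q}_2\|^2\le\|\bar{\mathbf{y}}_1-\bar{\mathbf{y}}_2\|^2$; since $0<\kappa\le1$ and $\delta\in(0,1)$, the composite constant $\kappa(1-\delta)$ is strictly positive, which preserves the discriminative-power guarantee.

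For the orthonormal case I would drop the sparsity requirement and instead use $\mathbf{W}^{\top}\mathbf{W}=\mathbf{I}_m$ to write $\|\mathbf{z}_1-\mathbf{z}_2\|^2 = (\mathbf{q}_1-\mathbf{q}_2)^{\top}\mathbf{W}^{\top}\mathbf{W}(\mathbf{q}_1-\mathbf{q}_2) = \|\mathbf{q}_1-\mathbf{q}_2\|^2$, which holds for every pair irrespective of sparsity. Substituting this exact isometry directly into \eqref{eq:LBAndUB_1} immediately yields $\kappa\|\mathbf{q}_1-\mathbf{q}_2\|^2\le\|\bar{\mathbf{y}}_1-\bar{\mathbf{y}}_2\|^2\le\|\mathbf{q}_1-\mathbf{q}_2\|^2$, i.e. the claimed two-sided bound read off as the $\delta=0$ specialization of the sparse result.

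The mathematics is essentially transitivity of inequalities, so there is no genuinely hard computational step. The point that needs care — and that \eqref{eq:RIP_property_necessary} quietly absorbs — is the applicability of the RIP to the \emph{difference} vector: if $\mathbf{q}_1$ and $\mathbf{q}_2$ are each $\nu$-sparse, then $\mathbf{q}_1-\mathbf{q}_2$ is at most $2\nu$-sparse, so the constant $\delta$ must be the one guaranteed at sparsity level $2\nu$, and the row count $n=O(\|\mathbf{q}\|_0\log m)$ should be read with this factor in mind. A secondary subtlety worth flagging is that $\kappa$ is data-dependent, being a function of $\mathbf{z}_1,\mathbf{z}_2$ and hence of $\mathbf{q}_1,\mathbf{q}_2$, so the lower bound is a pointwise rather than a uniform guarantee; turning it into a uniform statement would require bounding $\kappa$ away from zero over the input set, which the present claim does not attempt.
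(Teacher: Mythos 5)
Your proposal is correct and follows essentially the same route as the paper: the paper's own argument is exactly the transitive chaining of the RIP sandwich \eqref{eq:RIP_property_necessary} with the sign-splitter/ReLU sandwich \eqref{eq:LBAndUB_1} through the common quantity $\|\mathbf{z}_1-\mathbf{z}_2\|^2$, with the orthonormal case handled by the exact isometry $\|\mathbf{W}(\mathbf{q}_1-\mathbf{q}_2)\|^2=\|\mathbf{q}_1-\mathbf{q}_2\|^2$. Your two flagged subtleties --- that the RIP constant must be taken at sparsity level $2\nu$ for the difference vector, and that $\kappa$ is data-dependent so the lower bound is pointwise rather than uniform --- are genuine refinements that the paper glosses over, but they do not change the structure of the argument.
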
 

\begin{myremark}[Tightness of bounds and effect of noise]
	\label{remark:noise_effect_in_q}
	With the relation $\Delta \mathbf{z} = \mathbf{W} \, \Delta \mathbf{q}$, we assume that Assumption~\ref{assump:sign_pattern_change_in_z} holds as the perturbation noise $\Delta \mathbf{q}$ varies. Therefore, we follow similar arguments in Remark~\ref{remark:noise_effect_in_z}.
	For a low perturbation noise strength $\| \Delta \mathbf{q} \|^2$, we have $\| \bar{\mathbf{y}}_1 - \bar{\mathbf{y}}_2 \|^2 \approx \| \mathbf{q}_1 - \mathbf{q}_2 \|^2$ and $\kappa \approx 1$. We interpret that a low perturbation noise passes through the transfer function $\mathbf{g}(\mathbf{V} \mathbf{W} \mathbf{q})$ almost unhindered. On the other hand, a perturbation noise with high strength is attenuated. 
\end{myremark}

\section{Block Chain Construction}
A feedforward ANN is comprised of similar operational blocks in a chain. Let us consider two blocks in feedforward connection. These can be $l$'th and $(l+1)$'th blocks of an ANN. For the $l$'th block, we use a superscript $(l)$ to denote appropriate variables and systems. Let the $l$'th block have $m^{(l)}$ nodes. The input to the $l$'th block $\mathbf{q}^{(l)} = \bar{\mathbf{y}}^{(l-1)}$ is assumed to be sparse. The output of $l$'th block $\bar{\mathbf{y}}^{(l)} = \mathbf{g}(\mathbf{V}_{n^{(l)}}\mathbf{z}^{(l)}) = \mathbf{g}(\mathbf{V}_{n^{(l)}} \mathbf{W}^{(l)}\mathbf{q}^{(l)}) $ is also sparse, and this output is used as the input to the succeeding $(l+1)$'th block.  This means $\bar{\mathbf{y}}^{(l)} = \mathbf{q}^{(l+1)}$. Then, the output of $(l+1)$'th block is
\begin{eqnarray*}
	\bar{\mathbf{y}}^{(l+1)}& =& \mathbf{g}(\mathbf{V}_{n^{(l+1)}}\mathbf{z}^{(l+1)}) = \mathbf{g}(\mathbf{V}_{n^{(l+1)}} \mathbf{W}^{(l+1)}\mathbf{q}^{(l+1)})\nonumber\\& =& \mathbf{g}(\mathbf{V}_{n^{(l+1)}} \mathbf{W}^{(l+1)}\bar{\mathbf{y}}^{(l)}) \nonumber\\&=& \mathbf{g}(\mathbf{V}_{n^{(l+1)}} \mathbf{W}^{(l+1)} \mathbf{g}(\mathbf{V}_{n^{(l)}} \mathbf{W}^{(l)}\mathbf{q}^{(l)}))
\end{eqnarray*}
Corresponding to the two vectors $\mathbf{q}^{(l)}_1$ and $\mathbf{q}^{(l)}_2$, and their appropriate transforms, we have the following relations
\begin{align*}
	&\kappa_l(1-\delta_l) \| \mathbf{q}_1^{(l)} - \mathbf{q}_2^{(l)} \|^2 \nonumber\\&\quad\leq \| \bar{\mathbf{y}}_1^{(l)} - \bar{\mathbf{y}}_2^{(l)} \|^2 
	\leq (1+\delta_l) \| \mathbf{q}_1^{(l)} - \mathbf{q}_2^{(l)} \|^2, \\
		&\kappa_{l+1}(1-\delta_{l+1}) \| \bar{\mathbf{y}}_1^{(l)} - \bar{\mathbf{y}}_2^{(l)} \|^2 \nonumber\\&\quad\leq \| \bar{\mathbf{y}}_1^{(l+1)} - \bar{\mathbf{y}}_2^{(l+1)} \|^2 \leq (1+\delta_{l+1}) \| \bar{\mathbf{y}}_1^{(l)} - \bar{\mathbf{y}}_2^{(l)} \|^2.
\end{align*}
As a consequence of the above relations, the feedforward chain with two blocks follows
\begin{align}
&	\kappa_{l}\kappa_{l+1}(1-\delta_l)(1-\delta_{l+1}) \| \mathbf{q}_1^{(l)} - \mathbf{q}_2^{(l)} \|^2 \nonumber\\&\quad\leq \| \bar{\mathbf{y}}_1^{(l+1)} - \bar{\mathbf{y}}_2^{(l+1)} \|^2 \leq (1+\delta_l) (1+\delta_{l+1}) \| \mathbf{q}_1^{(l)} - \mathbf{q}_2^{(l)} \|^2
	\label{eq:two_block_inequality}
\end{align}

\begin{mytheo}
	\label{theo:RobustnessOfANN_forNonOthogonalMatrix}
	Let a feedforward ANN using ReLU activation function be constructed as follows.
	\begin{enumerate}
		\item[(a)] The ANN comprises $L$ layers where the $l$'th layer has the transfer function $\bar{\mathbf{y}}^{(l)} = \mathbf{g}(\mathbf{V}_{n^{(l)}} \mathbf{W}^{(l)} \bar{\mathbf{y}}^{(l-1)})$. The $L$ blocks are in a chain. The input to the first block is $\mathbf{q}^{(1)} = \mathbf{x}$. The output of ANN is
		\begin{align*}
			\bar{\mathbf{y}}^{(L)} &\! = \mathbf{g}(\mathbf{V}_{n^{(L)}}\mathbf{z}^{(L)})  \nonumber\\& \!= \! \mathbf{g}(\mathbf{V}_{n^{(L)}} \! \mathbf{W}^{(L)} \! \mathbf{g}(\!\mathbf{V}_{n^{(L-1)}} \! \mathbf{W}^{(L-1)} \! \hdots \!\mathbf{g}(\mathbf{V}_{n^{(1)}} \! \mathbf{W}^{(1)} \mathbf{x}).
		\end{align*}
		\item[(b)] In the ANN, $\mathbf{W}^{(l)} \in \mathbb{R}^{n^{(l)} \times m^{(l)}}$ matrices are randomly constructed with appropriate sizes, that is $n^{(l)} = O( \nu^{(l)} \, \log m^{(l)})$ where $\nu^{(l)}$ is assumed a maximum sparsity level for $\mathbf{q}^{(l)}$, and $m^{(l)} = 2 n^{(l-1)} = 2 O( \nu^{(l-1)} \, \log m^{(l-1)})$.
	\end{enumerate}
	Then, the ANN provides both noise robustness and discriminative power properties jointly characterized by the following relation
	\begin{align}
		&\prod_{l=1}^L \kappa_{l}(1-\delta_l) \| \mathbf{x}_1 - \mathbf{x}_2 \|^2\nonumber\\& \quad\leq \| \bar{\mathbf{y}}_1^{(L)} - \bar{\mathbf{y}}_2^{(L)} \|^2 \leq \prod_{l=1}^L (1+\delta_l)   \| \mathbf{x}_1 - \mathbf{x}_2 \|^2,
	\end{align}
	where $\mathbf{x}_1$ and $\mathbf{x}_2$ are two input vectors to the ANN and their corresponding outputs are $\bar{\mathbf{y}}_1^{(L)}$ and $\bar{\mathbf{y}}_2^{(L)}$, respectively. We assume that $\mathbf{x}_1$ and $\mathbf{x}_2$ are also sparse in some basis.
\end{mytheo}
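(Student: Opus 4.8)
The plan is to prove the two-sided bound by induction on the layer index, using the single-block estimate as the inductive step and the block-chaining identity $\mathbf{q}^{(l+1)} = \bar{\mathbf{y}}^{(l)}$ to stitch consecutive blocks together. First I would record the per-layer estimate that is already assembled in the excerpt: combining the $\mathbf{V}_{n^{(l)}}$-bound of Proposition~\ref{proposition:proposition_with_Vn} (equation~\eqref{eq:LBAndUB_1}) with the RIP inequality~\eqref{eq:RIP_property_necessary} for the randomly drawn $\mathbf{W}^{(l)}$ gives, for every block,
\begin{align*}
\kappa_l(1-\delta_l)\,\|\mathbf{q}_1^{(l)} - \mathbf{q}_2^{(l)}\|^2 \leq \|\bar{\mathbf{y}}_1^{(l)} - \bar{\mathbf{y}}_2^{(l)}\|^2 \leq (1+\delta_l)\,\|\mathbf{q}_1^{(l)} - \mathbf{q}_2^{(l)}\|^2.
\end{align*}
The base case $L=1$ is exactly this with $\mathbf{q}^{(1)} = \mathbf{x}$, and the two-layer case is already displayed in~\eqref{eq:two_block_inequality}.

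For the inductive step I would assume the product bound holds through block $l$ and then apply the per-layer estimate to block $l+1$, substituting $\mathbf{q}^{(l+1)} = \bar{\mathbf{y}}^{(l)}$ so that $\|\mathbf{q}_1^{(l+1)} - \mathbf{q}_2^{(l+1)}\|^2 = \|\bar{\mathbf{y}}_1^{(l)} - \bar{\mathbf{y}}_2^{(l)}\|^2$. Because every factor $\kappa_j(1-\delta_j)$ and $(1+\delta_j)$ is strictly positive, multiplying the inductive bound by the new layer's factors preserves both inequalities, so the lower constant becomes $\prod_{j=1}^{l+1}\kappa_j(1-\delta_j)$ and the upper constant becomes $\prod_{j=1}^{l+1}(1+\delta_j)$. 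Iterating to $l=L$ yields the claimed relation, with $\mathbf{q}^{(1)} = \mathbf{x}$ supplying the $\|\mathbf{x}_1 - \mathbf{x}_2\|^2$ factor on both sides.

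The main obstacle is verifying that the sparsity hypothesis needed for each layer's RIP actually propagates down the chain, since~\eqref{eq:RIP_property_necessary} is only valid for sparse inputs. Here the key is the output-sparsity property of the $\mathbf{V}_n$-construction, which guarantees $\|\bar{\mathbf{y}}^{(l)}\|_0 \leq n^{(l)}$; together with the sizing rule~(b), namely $m^{(l+1)} = 2n^{(l)}$ with $n^{(l)} = O(\nu^{(l)}\log m^{(l)})$, this is exactly what makes the output $\mathbf{q}^{(l+1)} = \bar{\mathbf{y}}^{(l)}$ admissible as a $\nu^{(l+1)}$-sparse input to the next block's RIP. A subtlety I would flag is that RIP is invoked on the \emph{difference} $\mathbf{q}_1^{(l)} - \mathbf{q}_2^{(l)}$, whose support can be as large as twice the per-vector sparsity; I would handle this by reading $\nu^{(l)}$ as a bound on the sparsity of the difference, equivalently constructing $\mathbf{W}^{(l)}$ to enjoy a $2\nu^{(l)}$-RIP. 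Finally, since each $\kappa_l$ depends on $(\mathbf{z}_1^{(l)}, \mathbf{z}_2^{(l)})$, I would invoke Assumption~\ref{assump:sign_pattern_change_in_z} and Remark~\ref{remark:noise_effect_in_z} to keep every $\kappa_l$ strictly positive, ensuring the lower-bound product does not collapse and the discrimination property survives the composition of all $L$ blocks.
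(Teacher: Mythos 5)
Your proof follows essentially the same route as the paper, which simply iterates the two-block chaining inequality \eqref{eq:two_block_inequality} across all $L$ layers; your induction with the substitution $\mathbf{q}^{(l+1)} = \bar{\mathbf{y}}^{(l)}$ is precisely that argument made explicit. Your additional care about propagating sparsity down the chain and about the RIP needing to hold for the \emph{difference} $\mathbf{q}_1^{(l)} - \mathbf{q}_2^{(l)}$ (support up to $2\nu^{(l)}$) goes beyond the paper's one-line proof and addresses genuine gaps the paper leaves implicit.
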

\begin{proof}
	The proof follows by applying the relation \eqref{eq:two_block_inequality} for all $l\in{1,L-1}$.
\end{proof}
\begin{mytheo}
	\label{theo:RobustnessOfANN_forOthogonalMatrix}
	 If we construct an ANN where $\mathbf{W}^{(l)} \in \mathbb{R}^{n^{(l)} \times m^{(l)}}$ matrices are randomly constructed othonormal matrices with $n^{(l)} \geq m^{(l)}$, then, the ANN will provide the following relation
	\begin{eqnarray}
		\prod_{l=1}^L \kappa_{l} \| \mathbf{x}_1 - \mathbf{x}_2 \|^2 \leq \| \bar{\mathbf{y}}_1^{(L)} - \bar{\mathbf{y}}_2^{(L)} \|^2 \leq  \| \mathbf{x}_1 - \mathbf{x}_2 \|^2.
	\end{eqnarray}
\end{mytheo}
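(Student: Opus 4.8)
The plan is to mirror the telescoping argument behind Theorem~\ref{theo:RobustnessOfANN_forNonOthogonalMatrix}, but to replace the RIP-based two-sided bound \eqref{eq:RIP_property_necessary} with the \emph{exact} isometry that an orthonormal weight matrix provides. First I would recall from the orthonormal case of Proposition~2 that when $\mathbf{W}^{(l)}$ is orthonormal with $n^{(l)} \geq m^{(l)}$ and $\mathbf{W}^{(l)\top}\mathbf{W}^{(l)} = \mathbf{I}_{m^{(l)}}$, the linear map preserves distances exactly, $\| \mathbf{z}_1^{(l)} - \mathbf{z}_2^{(l)} \|^2 = \| \mathbf{q}_1^{(l)} - \mathbf{q}_2^{(l)} \|^2$, since $\| \mathbf{W}^{(l)} \mathbf{v} \|^2 = \mathbf{v}^\top \mathbf{W}^{(l)\top} \mathbf{W}^{(l)} \mathbf{v} = \| \mathbf{v} \|^2$. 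Combining this with the sign-splitter bound \eqref{eq:LBAndUB_1} of Proposition~\ref{proposition:proposition_with_Vn} yields the single-block relation $\kappa_l \| \mathbf{q}_1^{(l)} - \mathbf{q}_2^{(l)} \|^2 \leq \| \bar{\mathbf{y}}_1^{(l)} - \bar{\mathbf{y}}_2^{(l)} \|^2 \leq \| \mathbf{q}_1^{(l)} - \mathbf{q}_2^{(l)} \|^2$, which is exactly the orthonormal per-layer estimate already recorded at the end of Proposition~2.

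Next I would exploit the feedforward identity $\mathbf{q}^{(l+1)} = \bar{\mathbf{y}}^{(l)}$ to stitch consecutive layers together. For the lower bound, applying the single-block estimate at layer $l+1$ and then at layer $l$ gives $\| \bar{\mathbf{y}}_1^{(l+1)} - \bar{\mathbf{y}}_2^{(l+1)} \|^2 \geq \kappa_{l+1} \| \bar{\mathbf{y}}_1^{(l)} - \bar{\mathbf{y}}_2^{(l)} \|^2 \geq \kappa_{l+1} \kappa_l \| \mathbf{q}_1^{(l)} - \mathbf{q}_2^{(l)} \|^2$; for the upper bound both contraction factors are unity, so $\| \bar{\mathbf{y}}_1^{(l+1)} - \bar{\mathbf{y}}_2^{(l+1)} \|^2 \leq \| \mathbf{q}_1^{(l)} - \mathbf{q}_2^{(l)} \|^2$. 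This is precisely the two-block inequality \eqref{eq:two_block_inequality} specialized to $\delta_l = \delta_{l+1} = 0$. I would then finish by induction on the depth, iterating the two-block relation from $l = 1$ through $l = L-1$ with the base case $\mathbf{q}^{(1)} = \mathbf{x}$: the upper factors collapse to $\prod_{l=1}^{L} 1 = 1$, while the lower factors accumulate to $\prod_{l=1}^{L} \kappa_l$, delivering the claimed bound.

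The argument is essentially mechanical once the single-block orthonormal estimate is in hand, so the main point to verify carefully is not an analytic difficulty but a structural consistency: because the sign-splitter $\mathbf{V}_{n^{(l)}}$ doubles the dimension, the input dimension of layer $l+1$ satisfies $m^{(l+1)} = 2 n^{(l)}$, and orthonormality of $\mathbf{W}^{(l+1)}$ with $\mathbf{W}^{(l+1)\top}\mathbf{W}^{(l+1)} = \mathbf{I}_{m^{(l+1)}}$ forces $n^{(l+1)} \geq m^{(l+1)} = 2 n^{(l)}$. Thus the per-layer widths must grow at least geometrically for the orthonormal construction to remain feasible at every depth, and I would flag this as the hypothesis $n^{(l)} \geq m^{(l)}$ being invoked at each layer. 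A second point worth stating explicitly is that each $\kappa_l$ depends on the pair $(\mathbf{z}_1^{(l)}, \mathbf{z}_2^{(l)})$, hence on the propagated inputs, so the product bound is to be read for a fixed pair $(\mathbf{x}_1, \mathbf{x}_2)$; unlike the RIP constants $\delta_l$, no sparsity assumption on the inputs is needed here because the orthonormal isometry is exact.
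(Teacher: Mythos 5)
Your proposal is correct and takes essentially the same approach as the paper: the paper establishes this result by the identical chaining argument, namely combining the exact isometry $\| \mathbf{z}_1^{(l)} - \mathbf{z}_2^{(l)} \|^2 = \| \mathbf{q}_1^{(l)} - \mathbf{q}_2^{(l)} \|^2$ of orthonormal $\mathbf{W}^{(l)}$ with the sign-splitter bound \eqref{eq:LBAndUB_1}, i.e., the two-block inequality \eqref{eq:two_block_inequality} specialized to $\delta_l = 0$ and iterated across the $L$ layers. Your added observations---that the orthonormality constraint forces $n^{(l+1)} \geq m^{(l+1)} = 2 n^{(l)}$ so layer widths grow at least geometrically, and that each $\kappa_l$ depends on the propagated pair of inputs---are correct points the paper leaves implicit.
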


\begin{myremark}[Tightness of bounds and effect of noise]
	\label{remark:noise_effect_in_block}
	We follow similar arguments in Remark~\ref{remark:noise_effect_in_q}.
	We interpret that a low perturbation noise passes through the block chain almost unhindered. On the other hand, a perturbation noise with high strength is attenuated. 
\end{myremark}

As an illustration of the concept, we show the plot showing the perturbation in output of one block alongwith the corresponding input perturbations in Figure 1.
In the experiment, we consider $\mathbf{q}_1$ to be a isotropic multivariate Gaussian $\mathcal{N}(\mathbf{0},\sigma^2\mathbf{I})$ with $\sigma=1$ with $m=16$ and $\mathbf{q}_2=\mathbf{q}_1+\Delta$, where $\Delta$ is drawn from isotropic multivariate Gaussian distribution $\mathcal{N}(\mathbf{0},\sigma^2\mathbf{I})$ with $\sigma=0.25$. We choose $\mathbf{W}$ with $m=n$ and $m=2n$ with entries drawn from $\mathcal{N}(0,\frac{1}{n})$ for various realizations of $\Delta$ and $\mathbf{q}_1$ and $\mathbf{q}_2$. The figure shows the scatter plot of $10^5$ samples. We observe that all values of  $\|\mathbf{y}_1-\mathbf{y}_2\|^2$ lie strictly below $\|\mathbf{y}_1-\mathbf{y}_2\|^2=\|\mathbf{q}_1-\mathbf{q}_2\|^2$ line. Further, we observe that the contraction is greater when a larger dimensional $\mathbf{W}$ is used. 
\begin{figure}
	\label{fig:scatter_plot}
	\centering
	\includegraphics[width=2.8in]{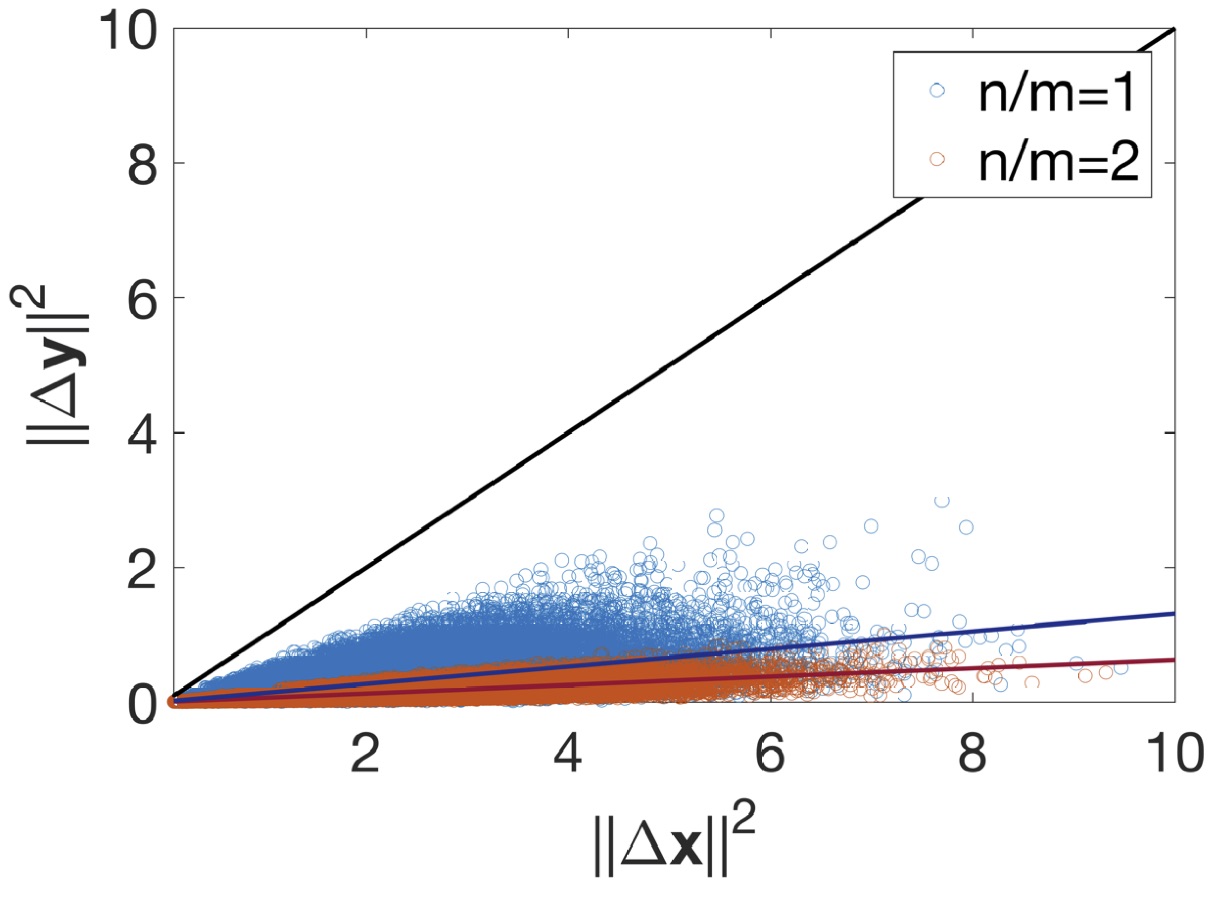}
	\caption{Noise robustness of the proposed ANN. The black line shows the reference line where the output perturbation is equal to the input perturbation. The thick blue and red lines indicate the least squares fit line to the datapoints of the respective colours.} 
	\vspace{-.2in}
\end{figure}
\section{Conclusion}
We show that random weights, sign splitter and rectified linear units provide a good combination to address two important properties of artificial neural networks--robustness and discriminative ability. We note the results with random orthonormal matrices are equally valid for standard real orthonormal matrices, for example, discrete cosine transform (DCT), Haar transform, Walsh-Hadamard transform, etc making our approach universal in nature. We believe that our analysis provides clues on the effectiveness of using random feature weights and ReLU functions in deep neural architectures.






\bibliographystyle{IEEEtran}
\bibliography{ref_r3net}
%


\end{document}